\documentclass[reqno,12pt]{amsart}
\usepackage{fullpage}
\usepackage{url}
\usepackage{bbm}
\usepackage{amsmath,amssymb,amsthm}
\hyphenation{op-tical net-works semi-conduc-tor}
\usepackage{mathtools}
\usepackage{graphicx}
\usepackage{algorithm}
\usepackage{algpseudocode}
\usepackage{enumerate}
\usepackage[colorlinks=true]{hyperref}
\usepackage{appendix}

\DeclareMathOperator{\sgn}{sgn}
\DeclareMathOperator{\polylog}{polylog}

\newtheorem{theorem}{Theorem}[section]

\newtheorem{lemma}[theorem]{Lemma}
\newtheorem{proposition}{Proposition}

\theoremstyle{definition}
\newtheorem{definition}[theorem]{Definition}
\newtheorem{remark}{Remark}

\newcommand{\thr}{\theta}

\newcommand{\E}{\mathbb{E}}

\newcommand{\R}{\mathbb{R}}
\renewcommand{\P}{\mathbb{P}}

\newcommand{\N}{\mathbb{N}}

\begin{document}


\title{Unified Stochastic Framework for Neural Network Quantization and Pruning}
\author[H. Zhang]{Haoyu Zhang\textsuperscript{1}\textsuperscript{$\dagger$}}
\thanks{\textsuperscript{1} Department of Mathematics, University of California-San Diego, La Jolla, California 92093, USA.
(haz053@ucsd.edu)}
\address{(HZ) Department of Mathematics, University of California-San Diego, La Jolla, California 92093, USA.} \email{haz053@ucsd.edu}
\author[R. Saab]{Rayan Saab\textsuperscript{2}}
\thanks{\textsuperscript{2} Department of Mathematics, Hal{\i}c{\i}o{\u g}lu Data Science Institute, University of California-San Diego, La Jolla, California 92093, USA.
(rsaab@ucsd.edu)}
\thanks{\textsuperscript{$\dagger$} Corresponding author}
\address{(RS) Department of Mathematics, Hal{\i}c{\i}o{\u g}lu Data Science Institute, University of California-San Diego, La Jolla, California 92093, USA.} \email{rsaab@ucsd.edu}
\maketitle

\begin{abstract}

Quantization and pruning are two essential techniques for compressing neural networks, yet they are often treated independently, with limited theoretical analysis connecting them. This paper introduces a unified framework for post-training quantization and pruning using stochastic path-following algorithms. Our approach builds on the Stochastic Path Following Quantization (SPFQ) method, extending its applicability to pruning and low-bit quantization, including challenging 1-bit regimes. By incorporating a scaling parameter and generalizing the stochastic operator, the proposed method achieves robust error correction and yields rigorous theoretical error bounds for both quantization and pruning as well as their combination. 


\end{abstract}



\section{Introduction}

Modern deep neural networks (DNNs) have achieved significant success but require substantial memory and computation due to their large number of parameters. Model compression techniques, including quantization, pruning, knowledge distillation, and low-rank decomposition, mitigate these issues. We focus on quantization and pruning.


Quantization replaces the weights in a DNN with elements from a finite set, allowing them to be represented with fewer bits and consequently not only compresses the network but also accelerates inference. DNN quantization methods include quantization-aware training and post-training quantization. Quantization-aware training (e.g., \cite{CYDGMK, CWVCPSG, CBD, JKCZTHAK, WLLLH, ZYYH, ZYGXC}) modifies standard neural network training to account for weight quantization, often requiring retraining and hyperparameter tuning, which can be computationally expensive, especially if one needs to repeat the process for multiple bit depths. In contrast, post-training quantization (e.g., \cite{CKYK, CGFZS, FAHA, HNHBS, LS, MS, NAVLB, WCHC, ZZS, ZHDDZ}) operates on pre-trained models, using relatively little data to replace floating-point parameters with quantized counterparts. Post-training quantization methods are generally much less demanding than the original training process, allowing for easy re-quantization of a base model at various bit depths.


Meanwhile, pruning techniques aim to set as many of a DNN's weights to zero as possible. Pruning techniques can also be categorized into those that operate pre-training,  during training, or post-training. Pre-training pruning methods (e.g., \cite{LAT, SCCWGWL, TKYG, WZG}) prune the model before training and often rely significantly on good initialization. Methods that prune during training  (e.g., \cite{EGMCE, HW, HKDFY, LMZGYCS, ZNZZZT}) attempt to identify and remove unnecessary parameters during training, and can increase overall training cost. Post-training pruning methods (e.g., \cite{ACNHH, FA, LZKZXWCYLZ, MFW}) prune already trained models and fine-tune them if necessary. Similar to their quantization counterparts, post-training methods allow models to be pruned to various sparsity levels, enabling implementation on platforms with different memory, power, and computational resources. 

In this theory-focused paper, we extend post-training algorithms for quantization and pruning by building on the work presented in \cite{ZS}, which introduced Stochastic Path Following Quantization (SPFQ). SPFQ is a stochastic error-correcting algorithm that sequentially quantizes a neuron's weights to minimize accumulated error, with an accompanying theoretical upper bound on the Euclidean error that scales as $\sqrt{\min\{m,N\}}\cdot \text{polylog}(N)$, where $N$ is the dimension of the neuron and $m$ is the number of calibration data points. This scaling compares favorably to the $N$ scaling associated with the most common Round-to-Nearest (RTN) method. However, SPFQ's focus on minimizing accumulated error without constraints limits its applicability to extremely low-bit quantizers, as it cannot guarantee rigorous error bounds in such cases. Additionally, while SPFQ includes an analysis for finite-bit quantization when the number of bits is not too small, its framework does not readily extend to pruning.

We provide some important generalizations to SPFQ and its analysis. We show that one can replace the quantization operator in SPFQ by any stochastic operator that satisfies certain properties, and we introduce a scaling parameter into the argument of this operator. Loosely speaking, this scaling parameter controls the amount of error correction in the algorithm, and is critical to allowing us to handle the low-bit (even 1-bit) quantization regime. These modifications make our approach versatile for various tasks, including pruning and the afore-mentioned quantization, and we prove rigorous error bounds for these applications.

This paper is organized as follows. Section \ref{notation} introduces  notation. In Section \ref{algorithm}, we present our algorithm and the intuition behind it. Section \ref{theory} states and proves the main theoretical result. Sections \ref{onebit}, \ref{sparse}, and \ref{quant_prune} discuss applications —quantization, pruning, and combined quantization with pruning—and apply the main theorem to derive error bounds, demonstrating the algorithm’s versatility for low-bit quantization and sparsification.

\section{Notation}\label{notation}
We define an $ L $-layer perceptron $ \Phi : \mathbb{R}^{N_{0}} \rightarrow \mathbb{R}^{N_{L}} $ via its action
$$
\Phi(x) = \rho^{(L)} \circ A^{(L)} \circ \rho^{(L-1)} \circ A^{(L-1)} \circ \dots \circ \rho^{(1)} \circ A^{(1)}(x),
$$
where $ \rho^{(i)} : \mathbb{R}^{N_{i}} \rightarrow \mathbb{R}^{N_{i}} $ is an entry-wise nonlinear activation function, and $A^{(i)} : \mathbb{R}^{N_{i - 1}} \rightarrow \mathbb{R}^{N_{i}} $ is an affine map defined by $ A^{(i)}(x) = W^{(i)\top} x + b^{(i)} $. Here, $ W^{(i)} \in \mathbb{R}^{N_{i-1} \times N_{i}} $ is the \emph{weight} matrix and $ b^{(i)} \in \mathbb{R}^{N_{i}} $ is the \emph{bias} vector in the $ i $-th layer. It is worth noting that $ W^{(i)\top} x + b^{(i)} $ can be rewritten as $ (W^{(i)\top} \; b^{(i)}) (x^{\top} \; 1)^{\top} $. Therefore, we can absorb the bias vector into the weight matrix by extending the data $ x $ to $ (x^{\top} \; 1)^{\top} $. As a result, we  assume throughout this paper without loss of generality, that the bias vector $ b^{(i)} = 0 $. Additionally, we assume $ \max_{i,j,k} |W^{(i)}_{jk}| \leq K $. 

We denote by $\Phi^{(i)}$ the composition of the first $i$ layers of the model, so that
$$
\Phi^{(i)}(x) = \rho^{(i)}\circ A^{(i)}\circ \rho^{(i-1)}\circ A^{(i-1)}\circ...\circ \rho^{(1)}\circ A^{(1)}(x)
$$
represents the \emph{activations} at layer $i$. Similarly, we denote the compressed model by $\widetilde{\Phi}$, and its first $i$ layers by $\widetilde{\Phi}^{(i)}$. $X\in\mathbb{R}^{m\times N_{0}}$ represents the data used for compressing the network, where each row is a data point in $\mathbb{R}^{N_0}$. We extend  notation so $\Phi^{(i)}(X) \in \mathbb{R}^{m\times N_{i}}$ denotes the application of $\Phi^{(i)}$ to each row of $X$. 

For a matrix $M\in\mathbb{R}^{m\times n}$, $M_{j}\in\mathbb{R}^{m}$ represents the $j$-th column of $M$, $M_{ij}$ represents the $(i,j)$-th entry of $M$, and for a vector $v\in\mathbb{R}^{n}$, $v_{j}\in\mathbb{R}$ represents the $j$-th entry of $v$. If $A$ and $B$ are two symmetric matrices, $A\preceq B$ means $B-A$ is positive semi-definite. For vectors $x\in \R^d$, we require the $\ell_2$ and $\ell_\infty$ norms $\|x\|_{\infty}:=\max_i |x_i|$, $\|x\| := \sqrt{\sum_i x_i^2}$. Meanwhile for matrices $X$, $\|X\|_{F} = \sqrt{\sum_{i,j}X_{ij}^2}$ denotes its Frobenius norm. We use $\mathcal{N}(\mu , \Sigma)$ to denote a Gaussian random vector with mean $\mu$ and covariance matrix $\Sigma$, and $U[a,b]$ to denote a random variable drawn uniformly on the interval $[a,b]$. Finally, 
$P_{v}$, where $v$ is a vector, denotes the orthogonal projection onto $\text{span}(v)$.

\section{Stochastic Path Following Compression}\label{algorithm}

We focus on the $ L $-layer perceptron\footnotemark\footnotetext{Our algorithm and theory are also applicable to convolutional neural networks, but for simplicity we consider only multi-layer perceptrons (see, e.g.,\cite{ZZS})}
 $ \Phi $ defined above. The aim is to compress the weights of $ \Phi $ through quantization or pruning to obtain a network $\widetilde{\Phi}$ with $\Phi(X) \approx \widetilde{\Phi}(X)$. We accomplish this via Algorithm \ref{algo} below which we now briefly explain.

 \begin{algorithm}[h]
    \caption{\textsc{Stochastic Path Following Compression}($\mathcal{T},\thr,C,\Phi,X$)}
    \label{algo}
    \begin{algorithmic}        
    \State\textbf{Parameters:} Stochastic operator $ \mathcal{T} $, constant $ C \geq 1 $.
    \State\textbf{Input:} $ L $-layer perceptron $ \Phi $ with weight matrices $ W^{(i)} \in \mathbb{R}^{N_{i-1} \times N_{i}} $, data $ X \in \mathbb{R}^{m \times N_{0}}$. 
    \State\textbf{Output:} Compressed neural network $ \widetilde{\Phi} $.
    \end{algorithmic}
    \bigskip
    \begin{algorithmic}[1]
        \For{$ i = 1 $ to $ L $}
            \State Set $ X^{(i-1)} = \Phi^{(i-1)}(X) \in \mathbb{R}^{m \times N_{i-1}} $.
            \State Set $ \widetilde{X}^{(i-1)} = \widetilde{\Phi}^{(i-1)}(X) \in \mathbb{R}^{m \times N_{i-1}} $.
            
            \Repeat
                \State Choose a column $ w \in \mathbb{R}^{N_{i-1}} $ of $ W^{(i)} $.
                \State Run Algorithm \ref{phaseii} to obtain ``compressed" weights $ q $: $$ q = \textsc{Compression}(\mathcal{T}, C, w, X, \widetilde{X}). $$
            \Until{All columns of $ W^{(i)} $ are 
            compressed into $Q^{(i)}$.}
            
            \State Set: $\widetilde{\Phi}^{(i)}(\cdot) = \rho^{(i)}\circ \widetilde{A}^{(i)}\circ \rho^{(i-1)}\circ...\circ \rho^{(1)}\circ \widetilde{A}^{(1)}(\cdot)$, where $\widetilde{A}^{(i)}(x)=Q^{(i)\top}x.$

        \EndFor
    \end{algorithmic}
\end{algorithm}

 Algorithm \ref{algo} uses $m$ data points in $\R^{N_0}$ represented by $ X\in\mathbb{R}^{m\times N_{0}} $ to guide the compression. The basic idea is to perform the compression one layer at a time, starting with the first. Thus, at layer $i$, we have access to 
 $X^{(i-1)} := \Phi^{(i-1)}(X)$ and $ \widetilde{X}^{(i-1)}:=\widetilde\Phi^{(i-1)}(X)$, both in $\mathbb{R}^{m\times N_{i-1}}$, and the goal is to compress $ W^{(i)}\in\mathbb{R}^{N_{i-1}\times N_{i}}$ into $Q^{(i)}\in\mathbb{R}^{N_{i-1}\times N_{i}}$ such that $X^{(i-1)}W^{(i)} \approx \widetilde{X}^{(i-1)}Q^{(i)}$. To accomplish this, we use Algorithm \ref{phaseii} to independently, and possibly in parallel, replace  each column (neuron) $w 
 \in\R^{N_{i-1}} $ of $ W^{(i)} $,  entry by entry, with a new neuron $ q$ such that $X^{(i-1)}w\approx\widetilde{X}^{(i-1)}q$. 
 In Algorithm \ref{phaseii}, to ensure $X^{(i-1)}w\approx\widetilde{X}^{(i-1)}q$, we proceed sequentially by computing, for $ t=1, 2,...,N_{i-1} $, the quantized or pruned coefficient $q_t$ that aligns the sums $\sum_{j=1}^{t}w_{j}X^{(i-1)}_{j} $ and $ \sum_{j=1}^{t}q_{j}\widetilde{X}^{(i-1)}_{j} $. This step forms the core of our method. It balances two competing requirements, namely reducing the error while ensuring that \(q_t\) lies within the quantization alphabet or has a high likelihood of being set to zero, depending on the application.

Denote the accumulated error at step $t-1$ by $ u_{t-1}:= \sum_{j=1}^{t-1}w_{j}X^{(i-1)}_{j} - \sum_{j=1}^{t-1}q_{j}\widetilde{X}^{(i-1)}_{j} $. To 
balance these requirements, we choose $ q_{t} $ such that $  \tfrac{1}{C}\cdot u_{t-1}+w_{t}X^{(i-1)}_{t}-q_{t}\widetilde{X}^{(i-1)}_{t}\approx0$, where $ C\geq1$ controls the trade-off between correcting the prior error and only approximating $w_t$. If we were picking $q_t$ to minimize $\| \tfrac{1}{C}\cdot u_{t-1}+w_{t}X^{(i-1)}_{t}-q_{t}\widetilde{X}^{(i-1)}_{t}\|$ over $\mathbb{R}$, in the absence of quantization or sparsity constraints, $ q_{t} $ is best chosen by projecting $ \tfrac{1}{C}\cdot u_{t-1}+w_{t}X^{(i-1)}_{t} $ onto $ \widetilde{X}^{(i-1)}_{t} $, yielding $ q_{t}=\tfrac{\langle h_{t},\widetilde{X}^{(i-1)}_{t}\rangle}{C\|\widetilde{X}^{(i-1)}_{t}\|^{2}} $, where  $ h_{t}=Cw_{t}X^{(i-1)}_{t}+u_{t-1}  $. However, since our goal is quantization or pruning, we instead set $ q_{t} = \mathcal{T}\left(\tfrac{\langle h_{t},\widetilde{X}^{(i-1)}_{t}\rangle}{C\|\widetilde{X}^{(i-1)}_{t}\|^{2}}\right) $, where $ \mathcal{T}$ is a stochastic operator that promotes sparsity and/or quantizes its argument. 
Each neuron $ w $ can be handled this way in parallel to compress the entire layer.

In the case of quantization, if the ratio $ \tfrac{\langle h_{t},\widetilde{X}^{(i-1)}{t}\rangle}{C|\widetilde{X}^{(i-1)}{t}|^{2}} $ falls outside the range of the chosen alphabet, the resulting error can become unbounded. In \cite{ZS}, where $C = 1$, this issue is avoided by enlarging the alphabet to include more quantization levels. In contrast, to work with a fixed 1-bit alphabet, we allow a scaling constant $C \geq 1$ to ensure the quantization remains controlled. In Theorem~\ref{prop:onebit_quant}, we show that choosing $C \sim \log(N_0 N_1)$, where $N_0$ and $N_1$ are the input and output channel dimensions respectively, guarantees that the quantized weights remain within the 1-bit alphabet, while still achieving a logarithmic reconstruction error bound on $\|XW - XQ\|_\infty$ with high probability..

In the remainder of this paper, we show that these modifications allow us to control the error with high probability on the stochasticity of the operator $\mathcal{T}$.

\begin{algorithm}
    \caption{\textsc{Compression}($\mathcal{T}, C, w, X, \widetilde{X}$)}
    \label{phaseii}
        \begin{algorithmic}
        \State \textbf{Parameters:} Stochastic operator $ \mathcal{T} $, constant $ C \geq 1 $.
        \State \textbf{Input:} Neuron $ w = (w_{1}, w_{2}, \dots, w_{N})^{\top} \in \mathbb{R}^{N} $, data (activations) $ X = (X_{1}, X_{2}, \dots, X_{N}) \in \mathbb{R}^{m \times N}$, activations from quantized network $ \widetilde{X} = (\widetilde{X}_{1}, \widetilde{X}_{2}, \dots, \widetilde{X}_{N}) \in \mathbb{R}^{m \times N}$. 
        \State \textbf{Output:} Compressed weights $ q = (q_1, \dots, q_{N})^{\top}$.

        \end{algorithmic}
    \bigskip
    
    \begin{algorithmic}[1]
        \State Initialize accumulated error $ u_{0} = 0$.
        \For{$ t = 1 $ to $ N $}
            \State Compute $ h_{t} = C w_{t} X_{t} + u_{t-1} $
            \State Compute $ v_{t} = \tfrac{\langle h_{t}, \widetilde{X}_{t} \rangle}{C \| \widetilde{X}_{t} \|^{2}} $
            \State Compute $ q_{t} = \mathcal{T}(v_t) $
            \State Update accumulated error: $ u_{t} = u_{t-1} + w_{t} X_{t} - q_{t} \widetilde{X}_{t} $
        \EndFor
    \end{algorithmic}
\end{algorithm}

\section{Theoretical analysis}\label{theory}

The primary focus of our theoretical analysis of Algorithm 1 is the error analysis for a single layer of the network. Specifically, we aim to bound the error introduced when compressing the weight matrix of a single layer.
 Consider the first layer of a neural network, characterized by the weight matrix \( W = (W_1, W_2, \ldots, W_{N_1}) \in \mathbb{R}^{N_0 \times N_1} \), where each \( W_i \) represents a neuron (column) of \( W \)\footnote{One could also consider any arbitrary layer and work with $X$ being either the corresponding activations of the compressed or original network}. Given the input data \( X \in \mathbb{R}^{m \times N_0} \), Algorithm 1 compresses \( W \) into \( Q = (Q_1, Q_2, \ldots, Q_{N_1}) \) and the error of this compression is measured in the Frobenius norm,
\[
\| XW - XQ \|_F^2 = \sum_{i=1}^{N_1} \| X(W_i - Q_i) \|_2^2.
\]
Since this function is separable with respect to \( Q_1, \ldots, Q_{N_1} \), Algorithm 1 processes each column \( W_1, W_2, \ldots, W_{N_1} \) independently to produce \( Q_1, Q_2, \ldots, Q_{N_1} \).
To simplify notation, we represent a single column by lowercase letters \( w \) and \( q \), denoting the uncompressed and compressed versions, respectively. The error analysis for the entire layer thus reduces to bounding \( \| X(w - q) \|_2^2 \), which corresponds to the error from Algorithm 2. The remainder of this section is dedicated to proving Theorem 4.2, which provides explicit bounds on the accumulated error at each step \( t \). Its proof requires the concept of convex ordering which we will first introduce. Then, we state and prove Theorem \ref{lemma:ubound} using a series of technical lemmas, which we also subsequently prove.

\subsection{Main Theorem}\label{tildeu}
We now define \textit{convex ordering} \cite{SS}, a notion that will allow us to control random vectors generated by our algorithm by replacing them with Gaussian random vectors.
\begin{definition}
	Let $ X, Y $ be $ n$-dimensional random vectors such that
	\begin{gather*}
		\mathbb{E}f(X)\leq\mathbb{E}f(Y)
	\end{gather*}
holds for all convex functions $ f:\mathbb{R}^{n}\rightarrow \mathbb{R} $, provided the expectations exist. Then $ X $ is said to be less than $ Y $ in the \textit{convex order}, denoted by $ X\prec_{cx}Y $.
\end{definition}
Recall that we initialize the accumulated error as $ u_{0}=0 $ and note that at each step $ t $, we compute $ h_{t}= Cw_{t}X_{t}+u_{t-1}$. Then we use $ h_{t} $ to compute $ v_{t} = \tfrac{\langle h_{t},X_{t}\rangle}{C\|X_{t}\|^{2}}$ and compute the compressed entry $ q_{t}=\mathcal{T}(v_{t}) $. Finally, we update the accumulated error $ u_{t}=u_{t-1}+w_{t}X_{t}-q_{t}X_{t} $ and move on to the next step. 

Using convex ordering, we can prove that $ u_i $ is dominated by a Gaussian random vector. This result, stated in the theorem below, allows us to control the errors and probabilities for the quantization and pruning applications in Sections \ref{onebit}, \ref{sparse}, and \ref{quant_prune}. Here we adapt the proof technique from \cite{ALS}.
\begin{theorem}\label{lemma:ubound}
	Let $C\geq 1$. Assume the weight vector satisfies $ \|w\|_{\infty} < K $. Further, assume that the operator $ \mathcal{T} $ satisfies $\mathbb{E}[\mathcal{T}(v)] = v$ and $ |\mathcal{T}(v) - v| \leq M $
for all $ v \in\mathbb{R}$. Then, by running Algorithm \ref{phaseii} with $w\in\mathbb{R}^{N_0}$  and $X\in\mathbb{R}^{m\times N_0}$, the resulting $ u_t $ satisfies
$$
u_{t} \prec_{cx} \mathcal{N}(0, \beta_{t}I), \quad \text{for } 1 \leq t \leq N,
$$
where $ \beta_{t} $ is given by
$$
\beta_{t} = \frac{C \pi M^{2}}{2} \max_{1 \leq i \leq t} \|X_{i}\|^{2}.
$$
\end{theorem}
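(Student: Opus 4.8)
The plan is to rewrite Algorithm~\ref{phaseii} as an explicit random linear recursion, push it through a Gaussian-domination argument one step at a time, and then convert the resulting covariance recursion into the claimed uniform spectral bound. On the event that no threshold has been exceeded through step $t$, the definitions give $q_tX_t = P_{X_t}\bigl(\tfrac1C\widetilde u_{t-1}+w_tX_t\bigr) + \bigl(\widetilde v_t - \mathcal T(\widetilde v_t)\bigr)X_t$, so the update $\widetilde u_t = \widetilde u_{t-1}+w_tX_t - q_tX_t$ becomes
\begin{equation*}
\widetilde u_t = A_t\,\widetilde u_{t-1} + \xi_t, \qquad A_t := I - \tfrac1C P_{X_t}, \qquad \xi_t := \bigl(\widetilde v_t - \mathcal T(\widetilde v_t)\bigr)X_t .
\end{equation*}
Since the test passed, $|\widetilde v_t - w_t|\le\theta$, hence $|\widetilde v_t|\le K+\theta$ and the hypotheses on $\mathcal T$ apply: conditionally on the randomness of $\mathcal T$ in steps $1,\dots,t-1$, the vector $\xi_t$ has mean zero, lies in $\mathrm{span}(X_t)$, and obeys $\|\xi_t\|\le M\|X_t\|$. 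Moreover $A_t$ is symmetric with $0\preceq A_t\preceq I$ and $A_t^2 = I - \tfrac{2C-1}{C^2}P_{X_t}$. The base case is $\widetilde u_1 = (w_1-\mathcal T(w_1))X_1$ (here $\widetilde v_1=w_1$ and $|w_1|\le K$).

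Next, per-step Gaussian domination. I will use standard facts about convex order: a mean-zero scalar bounded by $a$ is $\prec_{cx}\mathcal N(0,\tfrac{\pi}{2}a^2)$, hence a conditionally mean-zero vector $sX_t$ with $|s|\le M$ is (conditionally) $\prec_{cx}\mathcal N(0,\tfrac{\pi}{2}M^2X_tX_t^\top)$; $U\prec_{cx}V$ implies $BU\prec_{cx}BV$ for deterministic $B$; $\Sigma\preceq\Sigma'$ implies $\mathcal N(0,\Sigma)\prec_{cx}\mathcal N(0,\Sigma')$; and adding independent summands preserves $\prec_{cx}$, with a conditional version: if $Y$ is $\mathcal G$-measurable and the conditional law of $Z$ given $\mathcal G$ is a.s.\ $\prec_{cx}$ the law of a Gaussian $\eta$ independent of $\mathcal G$, then $\mathbb E f(Y+Z)\le\mathbb E f(Y+\eta)$ for all convex $f$. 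Adapting \cite{ALS}, one proves $\widetilde u_j\prec_{cx}\mathcal N(0,\Sigma_j)$ by induction, where $\Sigma_1=\tfrac{\pi}{2}M^2X_1X_1^\top$ and $\Sigma_j=A_j\Sigma_{j-1}A_j+\tfrac{\pi}{2}M^2X_jX_j^\top$: conditioning on steps $1,\dots,j-1$ freezes $A_j\widetilde u_{j-1}$, the conditional fact replaces $\xi_j$ by an independent $\eta_j\sim\mathcal N(0,\tfrac{\pi}{2}M^2X_jX_j^\top)$, and then the induction hypothesis $\widetilde u_{j-1}\prec_{cx}\mathcal N(0,\Sigma_{j-1})$ together with the linear-image and independence facts gives $A_j\widetilde u_{j-1}+\eta_j\prec_{cx}\mathcal N(0,\Sigma_j)$.

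It then remains to show $\Sigma_j\preceq\beta_jI$ by induction. Writing $c=\tfrac{\pi}{2}M^2$ and $R_j^2=\max_{i\le j}\|X_i\|^2$, and assuming $\Sigma_{j-1}\preceq Cc\,R_{j-1}^2I$, conjugation by $A_j$ gives $A_j\Sigma_{j-1}A_j\preceq Cc\,R_{j-1}^2\bigl(I-\tfrac{2C-1}{C^2}P_{X_j}\bigr)$, so
\begin{equation*}
\Sigma_j \preceq Cc\,R_{j-1}^2 I + c\Bigl(\|X_j\|^2 - \tfrac{2C-1}{C}R_{j-1}^2\Bigr)P_{X_j}.
\end{equation*}
If $\|X_j\|^2\le\tfrac{2C-1}{C}R_{j-1}^2$ the last term is negative semidefinite and $\Sigma_j\preceq Cc\,R_{j-1}^2I\preceq\beta_jI$; otherwise $\tfrac{2C-1}{C}\ge1$ forces $R_j^2=\|X_j\|^2$, and bounding $P_{X_j}\preceq I$ gives $\Sigma_j\preceq c\bigl(\|X_j\|^2+\tfrac{(C-1)^2}{C}R_{j-1}^2\bigr)I$, which is $\preceq Cc\|X_j\|^2I=\beta_jI$ because $\tfrac{C-1}{C}R_{j-1}^2<\|X_j\|^2$ in this case.

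The delicate point — and the main obstacle — is the failure/reset bookkeeping: $\widetilde u_t$ is set to $0$ as soon as a threshold is exceeded, and that event is precisely ``$\widetilde u_{t-1}$ is large along $X_t$,'' hence correlated with $\widetilde u_{t-1}$. The honest recursion is therefore $\widetilde u_t = A_t'\widetilde u_{t-1}+\zeta_t$ with $A_t'=\mathbf 1_{\{\text{no failure through }t\}}A_t$ and $\zeta_t=\mathbf 1_{\{\text{no failure through }t\}}\xi_t$; one uses that this indicator is measurable with respect to the first $t-1$ steps (so the conditional mean of $\zeta_t$ is still zero) and that $0\preceq A_t'\preceq A_t$, handles the reset branch via $f(0)\le\mathbb E f(\mathcal N(0,\Sigma_t))$, and must argue that conditioning the trajectory on ``no failure so far'' does not break the convex-order comparison of $\widetilde u_{t-1}$ with $\mathcal N(0,\Sigma_{t-1})$ — this compatibility is exactly where the argument of \cite{ALS} is needed. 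The recursion reformulation, the list of convex-order facts, and the spectral-bound induction are otherwise routine.
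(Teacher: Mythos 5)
Your overall architecture matches the paper's: rewrite the update as a linear recursion, prove $\widetilde u_j\prec_{cx}\mathcal N(0,\Sigma_j)$ by a step-by-step Gaussian-domination induction, then bound $\Sigma_j\preceq\beta_j I$ by a second induction. Your spectral induction is correct and essentially equivalent to the paper's Lemma \ref{lemma:matrixbound} (your two-case split on whether $\|X_j\|^2$ exceeds $\tfrac{2C-1}{C}R_{j-1}^2$ is a slightly cleaner packaging of the same computation). The base step and the conditional application of the bounded-mean-zero $\prec_{cx}$ Gaussian fact are also as in the paper's Lemma \ref{lemma:ubound1}.

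The gap is exactly where you flag it, and it is not filled. You observe that the reset event is correlated with $\widetilde u_{t-1}$, propose writing $\widetilde u_t=A_t'\widetilde u_{t-1}+\zeta_t$ with $A_t'=\mathbf 1_{\{\text{no failure through }t\}}A_t$, and then say one ``must argue that conditioning the trajectory on `no failure so far' does not break the convex-order comparison.'' That sentence is the theorem-sized hole: $A_t'$ is a random matrix that is a non-trivial function of $\widetilde u_{t-1}$ itself, so the ``linear image preserves $\prec_{cx}$'' fact does not apply, the pointwise ordering $0\preceq A_t'\preceq A_t$ does not propagate through convex order, and conditioning $\widetilde u_{t-1}$ on a $\widetilde u_{t-1}$-dependent event can in general destroy a $\prec_{cx}$ comparison. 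The paper resolves this with a specific lemma you never invoke, Lemma \ref{lemma:convex_order}.\ref{lemma:cx_symmetry}: if $Y$ is a \emph{symmetric} random vector and $A$ is a \emph{symmetric} set, then the truncation $Y'=Y\,\mathbf 1_{\{Y\in A^c\}}$ satisfies $Y'\prec_{cx}Y$. Applying this with $Y=\widetilde u_{t-1}$ and $A=\{x:|\langle x,X_t\rangle|/(C\|X_t\|^2)>\theta\}$ (a symmetric slab) yields the truncated state $u'_{t-1}\prec_{cx}\widetilde u_{t-1}\prec_{cx}\mathcal N(0,\Sigma_{t-1})$; the recursion then reads $\widetilde u_t=(I-\tfrac1C P_{X_t})u'_{t-1}+R_tX_t$ with a \emph{deterministic} matrix, and the rest of your argument goes through verbatim. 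In short: the paper moves the indicator off the matrix and onto the state, and then exploits symmetry of both the state and the failure region. That symmetry-truncation step is the one ingredient your write-up is missing, and without it the induction does not close.
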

\begin{proof}
First by Lemma \ref{lemma:ubound1}, we deduce $u_t\prec_{cx} \mathcal{N}(0,\Sigma_t)$, where $\Sigma_t$ is defined as in Lemma \ref{lemma:ubound1}. Further by Lemma \ref{lemma:matrixbound}, we have $\Sigma_t\preceq \beta_{t}I$. We then apply Lemmas \ref{lemma:convex_order}, \ref{lemma:normal_dom} and \ref{trans} to complete the proof.
\end{proof}
\subsection{Technical Lemmas}
In this subsection, we present the lemmas used for the proof of Theorem \ref{lemma:ubound}. We start with a lemma collecting various  results on convex ordering,  with the proofs in \cite{ALS, SS}.

\begin{lemma}\label{lemma:convex_order}The following all hold:
	\begin{enumerate}[a.]
		\item \label{trans} Let $X$, $Y$ and $Z$ be $n$-dimensional random vectors. If $ X\prec_{cx}Y $ and $ Y\prec_{cx}Z $, then $ X\prec_{cx}Z $. (Lemma 2.3 in \cite{ALS})
		\item \label{linear}  Let $X$, $Y$ be $n$-dimensional random vectors. If $ X\prec_{cx}Y $, then for any matrix $ M \in \mathbb{R}^{n\times n} $, we have $ MX\prec_{cx}MY $. (Lemma A.3 in \cite{ZS})
            \item \label{lemma:normal_dom} 	If $ A$ and $B$ are two $n\times n$ positive semi-definite matrices and $ A \preceq B $, then $ \mathcal{N}(0, A)\prec_{cx} \mathcal{N}(0, B) $. (Lemma A.2 in \cite{ZS})
		\item \label{lemma:cx-sum} Let $X$, $Y$, $W$, and $Z$ be $n$-dimensional random vectors and suppose that $X$ and $Y$ live on the same probability space. Let $W$ and $Z$ be independent and suppose that $X\prec_\mathrm{cx} W$ and $(Y-X)|X \prec_\mathrm{cx} Z$. Then $Y\prec_\mathrm{cx} W+Z$. (Lemma 2.5 in \cite{ALS})
		\item \label{lemma:cx-bounded} 	Let $X$ be a real-valued random variable with $\E X = 0$ and $|X|\leq C$. Then $X\prec_\mathrm{cx}\mathcal{N}\bigl(0, \frac{\pi C^2}{2}\bigr)$. (Lemma 2.6 in \cite{ALS})
		\item \label{lemma:cx-gaussian-tail} 	Let $X$ be an $n$-dimensional random vector such that $X\prec_\mathrm{cx}\mathcal{N}(\mu, \sigma^2 I)$, and let $\alpha>0$. Then 
		$$
    		\P\bigl(\|X-\mu\|_\infty \leq \alpha \bigr) \geq 1- \sqrt{2} n e^{-\frac{\alpha^2}{4\sigma^2}}.
		$$
		In particular, if $\alpha=2\sigma\sqrt{\log(\tfrac{\sqrt{2}n}{\gamma})}$ with $\gamma\in(0,1]$, we have 
		\begin{gather*}
		\P\big(\|X-\mu\|_\infty \leq 2\sigma\sqrt{\log(\tfrac{\sqrt{2}n}{\gamma})}\big)\geq 1-\gamma.
		\end{gather*} (Lemma B.2 in \cite{ZS})
	\end{enumerate}
\end{lemma}
Using Lemma \ref{lemma:convex_order} allows to deduce the following result on the distribution of $u_j$. We use a proof technique similar to that in \cite{ALS}.
\begin{lemma}\label{lemma:ubound1}
	Under the same assumptions as in Theorem \ref{lemma:ubound}, by running Algorithm \ref{phaseii} with $w\in\mathbb{R}^{N_0}$  and $X\in\mathbb{R}^{m\times N_0}$, we obtain $u_t $ satisfying
	\begin{gather*}
		u_t\prec_{cx} \mathcal{N}(0,\Sigma_{t}) \text{ for $ 1\leq t \leq N $},
	\end{gather*}
	where $ \Sigma_{0}=0$ and $ \Sigma_{t}, t\geq 1 $ is defined recursively as 
	\begin{gather*}
		\Sigma_{t}=\Big(I-\dfrac{P_{X_{t}}}{C}\Big)\Sigma_{t-1}\Big(I-\dfrac{P_{X_{t}}}{C}\Big)+\dfrac{\pi M^{2}}{2}X_{t}X_{t}^{\top}.
	\end{gather*}
\end{lemma}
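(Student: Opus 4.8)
The plan is to prove the convex-ordering bound $\widetilde{u}_j \prec_{cx} \mathcal{N}(0,\Sigma_j)$ by induction on $j$, using part (\ref{lemma:cx-sum}) of Lemma \ref{lemma:convex_order} as the engine for the induction step. For the base case $j=1$ we have $\widetilde{u}_0 = 0$ and $\widetilde{u}_1 = u_1 = w_1 X_1 - q_1 X_1 = (w_1 - \mathcal{T}(w_1))X_1$, since $v_1 = w_1$ when $u_0=0$; the failure condition at step $1$ reads $|w_1| > \thr$ (up to the harmless fact that $\langle u_0,X_1\rangle = 0$), so in fact step $1$ never triggers failure. The scalar $w_1 - \mathcal{T}(w_1)$ has mean zero and magnitude at most $M$ by the hypotheses on $\mathcal{T}$ (valid since $|w_1|\le K \le K+\thr$), so by part (\ref{lemma:cx-bounded}) it is $\prec_{cx} \mathcal{N}(0, \tfrac{\pi M^2}{2})$, and by part (\ref{linear}) applied to the linear map $s \mapsto sX_1$ we obtain $\widetilde{u}_1 \prec_{cx} \mathcal{N}(0, \tfrac{\pi M^2}{2}X_1 X_1^\top) = \mathcal{N}(0,\Sigma_1)$, matching the recursion since $\Sigma_0 = 0$.

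For the induction step, suppose $\widetilde{u}_{j-1} \prec_{cx} \mathcal{N}(0,\Sigma_{j-1})$. I want to write $\widetilde{u}_j$ as (a linear image of $\widetilde{u}_{j-1}$) plus (a conditionally bounded mean-zero increment), so I can apply part (\ref{lemma:cx-sum}) with $X = \bigl(I - \tfrac{P_{X_j}}{C}\bigr)\widetilde{u}_{j-1}$, $W = \bigl(I - \tfrac{P_{X_j}}{C}\bigr)\mathcal{N}(0,\Sigma_{j-1})$ (which is $\mathcal{N}(0, (I-\tfrac{P_{X_j}}{C})\Sigma_{j-1}(I-\tfrac{P_{X_j}}{C}))$ and dominates $X$ by parts (\ref{linear}) and the closure of Gaussians under linear maps), and $Z = \mathcal{N}(0,\tfrac{\pi M^2}{2}X_j X_j^\top)$. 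On the event where failure has not yet been triggered, the update is $\widetilde{u}_j = \widetilde{u}_{j-1} + w_j X_j - q_j X_j$ with $q_j = \mathcal{T}(\widetilde{v}_j)$, $\widetilde{v}_j = \tfrac{\langle C w_j X_j + \widetilde{u}_{j-1}, X_j\rangle}{C\|X_j\|^2} = w_j + \tfrac{\langle \widetilde{u}_{j-1}, X_j\rangle}{C\|X_j\|^2}$. The key algebraic identity is that $\widetilde{u}_{j-1} + w_j X_j - \widetilde{v}_j X_j = \widetilde{u}_{j-1} - \tfrac{\langle \widetilde{u}_{j-1},X_j\rangle}{C\|X_j\|^2}X_j = \bigl(I - \tfrac{P_{X_j}}{C}\bigr)\widetilde{u}_{j-1}$, so that
\[
\widetilde{u}_j = \Bigl(I - \tfrac{P_{X_j}}{C}\Bigr)\widetilde{u}_{j-1} + \bigl(\widetilde{v}_j - \mathcal{T}(\widetilde{v}_j)\bigr)X_j.
\]
On the failure event, $\widetilde{u}_j = 0 = \bigl(I-\tfrac{P_{X_j}}{C}\bigr)\widetilde{u}_{j-1} + 0$ is not literally true, so here I would invoke part (\ref{lemma:cx_symmetry}): conditionally on $\widetilde{u}_{j-1}$, the increment $(\widetilde{v}_j - \mathcal{T}(\widetilde{v}_j))X_j$ is a symmetric random vector (the stochasticity of $\mathcal{T}$ should be arranged to be symmetric about its mean — I would check this is part of the operator's assumed structure, or argue via part (\ref{lemma:cx-bounded}) directly on the scalar increment), and zeroing it out on a ($\widetilde{u}_{j-1}$-measurable, hence deterministic given the conditioning) set only decreases it in convex order. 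Thus in all cases $(\widetilde{u}_j - X)\mid \widetilde{u}_{j-1} \prec_{cx}$ a mean-zero vector of magnitude $\le M\|X_j\|$ supported on $\mathrm{span}(X_j)$, which by parts (\ref{linear}) and (\ref{lemma:cx-bounded}) is $\prec_{cx} \mathcal{N}(0,\tfrac{\pi M^2}{2}X_jX_j^\top) = Z$. Applying part (\ref{lemma:cx-sum}) then yields $\widetilde{u}_j \prec_{cx} W + Z = \mathcal{N}(0,\Sigma_j)$, completing the induction.

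The main obstacle, and the point requiring the most care, is handling the failure/truncation behavior cleanly: the passage from the genuine recursion for $\widetilde u_j$ (which kills the vector to $0$ once the threshold is exceeded) to the clean additive decomposition above. The resolution is to condition on $\widetilde u_{j-1}$ and the history, observe that whether the threshold is exceeded at step $j$ is then determined, and on the "exceeded" branch use the symmetric-truncation bound of part (\ref{lemma:cx_symmetry}) to replace the true (zeroed) increment by the untruncated symmetric increment at the cost only of convex order — while on the "not exceeded" branch the identity is exact. A secondary technical point is verifying that $\widetilde v_j$ stays in the region $|v|\le K+\thr$ where the hypotheses on $\mathcal{T}$ apply: this is exactly why the algorithm checks $\bigl|\tfrac{\langle \widetilde u_{j-1},X_j\rangle}{C\|X_j\|^2}\bigr| \le \thr$ before computing $q_j$, since then $|\widetilde v_j| \le |w_j| + \thr \le K + \thr$. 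One must also be slightly careful that part (\ref{lemma:cx-sum}) requires $W$ and $Z$ independent, which holds by construction, and that $X$ and $Y=\widetilde u_j$ live on the same probability space, which they do.
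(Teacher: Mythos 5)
Your overall structure matches the paper's: induction on $j$, the algebraic identity $\widetilde{u}_{j-1}+w_jX_j-\widetilde{v}_jX_j=(I-\tfrac{P_{X_j}}{C})\widetilde{u}_{j-1}$, part (\ref{lemma:cx-bounded}) on the scalar increment, part (\ref{linear}) to push it through $s\mapsto sX_j$, and part (\ref{lemma:cx-sum}) to combine. But there is a concrete gap in the step you flag as "the main obstacle," and your proposed repair does not close it.

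You attempt to handle the failure event by applying part (\ref{lemma:cx_symmetry}) to the \emph{increment}: conditionally on $\widetilde u_{j-1}$, zero out $(\widetilde v_j-\mathcal T(\widetilde v_j))X_j$ on the failure set. But zeroing the increment alone leaves you with $\widetilde u_j=(I-\tfrac{P_{X_j}}{C})\widetilde u_{j-1}$, not $\widetilde u_j=0$. In fact, on the event where the threshold is exceeded at step $j$, we have $|\langle\widetilde u_{j-1},X_j\rangle|/(C\|X_j\|^2)>\thr>0$, so $\widetilde u_{j-1}\neq 0$ and $(I-\tfrac{P_{X_j}}{C})\widetilde u_{j-1}\neq0$ in general. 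Consequently, with your choice $X=(I-\tfrac{P_{X_j}}{C})\widetilde u_{j-1}$, the conditional residual $(Y-X)\mid X$ on the failure branch is $-X$, which is neither supported on $\mathrm{span}(X_j)$ nor of magnitude $\le M\|X_j\|$, so the hypothesis $(Y-X)\mid X\prec_{cx}Z$ with $Z=\mathcal N(0,\tfrac{\pi M^2}{2}X_jX_j^\top)$ fails, and part (\ref{lemma:cx-sum}) cannot be invoked as you propose.

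The correct place to apply part (\ref{lemma:cx_symmetry}) is to the \emph{carried-over} term, not the increment. The paper defines $u'_{j-1}$ to equal $\widetilde u_{j-1}$ on the non-failure set and $0$ on the failure set, i.e.\ it truncates $\widetilde u_{j-1}$ itself on the (symmetric, $\widetilde u_{j-1}$-measurable) failure set. Because $\widetilde u_{j-1}$ is a symmetric random vector, part (\ref{lemma:cx_symmetry}) gives $u'_{j-1}\prec_{cx}\widetilde u_{j-1}\prec_{cx}\mathcal N(0,\Sigma_{j-1})$. One then writes $\widetilde u_j=(I-\tfrac{P_{X_j}}{C})u'_{j-1}+R_jX_j$ with $R_j$ the increment on non-failure and $R_j=0$ on failure. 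Now both pieces vanish on the failure branch, so the decomposition is exact, and the inductive bound on $X=(I-\tfrac{P_{X_j}}{C})u'_{j-1}$ feeds cleanly into part (\ref{lemma:cx-sum}). Note also that in this version no symmetry of $\mathcal T(v)-v$ is needed at all: part (\ref{lemma:cx-bounded}) requires only mean zero and boundedness of $R_j$, which is exactly what the hypotheses on $\mathcal T$ supply once $|\widetilde v_j|\le K+\thr$ (a point you identified correctly). The symmetry assumption enters only through $\widetilde u_{j-1}$, not through the increment.
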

\begin{proof}
	We use induction. For the base case, we have $u_0=0$, so the statement is trivially true. Now, assume $u_{t-1} \prec_{cx} \mathcal{N}(0,\Sigma_{t-1})$. 
    By the update formula of $q_t$ in Algorithm \ref{phaseii}:
 \begin{align*}
 u_{t}& =u_{t-1}+w_t X_t - q_t X_t\\
 &=u_{t-1}+w_t X_t - v_t X_t + (v_t - q_t)X_t\\
 &=\Big(I-\dfrac{P_{X_{t}}}{C}\Big)u_{t-1} + (v_{t}-\mathcal{T}(v_{t}))X_t.
 \end{align*}
 The third equality is due to the definition of $v_t$ and $q_t$. 
By the assumption on $\mathcal{T}$, and conditioning on $u_{t-1}$, $(v_{t}-\mathcal{T}(v_{t})\mid u_{t-1}) $ is mean zero and $|v_{t}-\mathcal{T}(v_{t})|\leq M  $. We then use \ref{lemma:cx-bounded} from Lemma \ref{lemma:convex_order},  to deduce $ (v_{t}-\mathcal{T}(v_{t})\mid u_{t-1}) \prec_{cx}\mathcal{N}(0,\tfrac{\pi M^{2}}{2})$. Further applying Lemma \ref{lemma:convex_order}, \ref{linear}, we obtain 
	\begin{gather*}
		\left((v_{t}-\mathcal{T}(v_{t}))X_{t}\mid u_{t-1}\right)\prec_{cx}\mathcal{N}\Big(0,\dfrac{\pi M^{2}}{2}X_{t}X_{t}^{\top}\Big).
	\end{gather*}
	Since $ u_{t-1}\prec_{cx}\mathcal{N}(0,\Sigma_{t-1}) $ by the induction hypothesis, we similarly have
	\begin{gather*}
		\Big(I-\dfrac{P_{X_{t}}}{C}\Big)u_{t-1}\prec_{cx}\mathcal{N}\Big(0,\Big(I-\dfrac{P_{X_{t}}}{C}\Big)\Sigma_{t-1}\Big(I-\dfrac{P_{X_{t}}}{C}\Big)\Big).
	\end{gather*}
	Finally, by Lemma \ref{lemma:convex_order}, \ref{lemma:cx-sum} with 
    $\left(I-\frac{P_{X_{t}}}{C}\right)u_{t-1}$ in place of $X$, $\left(v_{t}-\mathcal{T}(v_{t})\right)X_t \mid u_{t-1}$ in place of $Y-X\mid X$  
    and $\mathcal{N}\left(0,\left(I-\frac{P_{X_{t}}}{C}\right)\Sigma_{t-1}\left(I-\tfrac{P_{X_{t}}}{C}\right)\right)$ in place of $W$, with  $\mathcal{N}\left(0,\tfrac{\pi M^{2}}{2}X_{t}X_{t}^{\top}\right)$ as $Z$,
    where $W$ and $Z$ are chosen independently, we conclude
	\begin{gather*}
		u_{t}\prec_{cx}\mathcal{N}\Big(0,\Big(I-\dfrac{P_{X_{t}}}{C}\Big)\Sigma_{t-1}\Big(I-\dfrac{P_{X_{t}}}{C}\Big)+\dfrac{\pi M^{2}}{2}X_{t}X_{t}^{\top}\Big).
	\end{gather*}
	This completes the induction.
\end{proof}

Having controlled our object of interest by a Gaussian random vector with a known covariance matrix,  $ \Sigma_{t} $ defined in Lemma \ref{lemma:ubound1}, we next control the covariance matrix itself. 
\begin{lemma}\label{lemma:matrixbound}
	Let $C\geq1$, then for the above defined $ \Sigma_{t} $, we have
	\begin{gather*}
		\Sigma_{t}\preceq\beta_{t}I,
	\end{gather*}
	where $\beta_0 =0$ and $\beta_{t}=\dfrac{C\pi M^{2}}{2}\max_{1\leq i\leq t}\|X_{i}\|^{2},$ when $t \geq 1 $.
\end{lemma}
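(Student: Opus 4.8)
The plan is to induct on $j$, the base case being $\Sigma_0 = 0 \preceq 0 = \beta_0 I$. For the inductive step I would set $G_j := I - \tfrac{1}{C}P_{X_j}$, so that the recursion of Lemma \ref{lemma:ubound1} reads $\Sigma_j = G_j\Sigma_{j-1}G_j + \tfrac{\pi M^2}{2}X_jX_j^\top$. Since $G_j$ is symmetric and conjugation by a fixed matrix preserves the Loewner order, the inductive hypothesis $\Sigma_{j-1}\preceq\beta_{j-1}I$ gives $G_j\Sigma_{j-1}G_j \preceq \beta_{j-1}G_j^2$. Because $P_{X_j}$ is an orthogonal projection, $P_{X_j}^2 = P_{X_j}$, hence $G_j^2 = I - \tfrac{2C-1}{C^2}P_{X_j}$; also $X_jX_j^\top = \|X_j\|^2 P_{X_j}$. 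Combining these,
\[
\Sigma_j \preceq \beta_{j-1}I + \Big(\tfrac{\pi M^2}{2}\|X_j\|^2 - \tfrac{2C-1}{C^2}\beta_{j-1}\Big)P_{X_j}.
\]

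Next I would use that $P_{X_j}$ has only the eigenvalues $0$ and $1$: any matrix of the form $\beta_{j-1}I + aP_{X_j}$ satisfies $\beta_{j-1}I + aP_{X_j} \preceq (\beta_{j-1} + a^+)I$, where $a^+:=\max(a,0)$. So it suffices to check $\beta_{j-1} + a^+ \le \beta_j$, where $a = \tfrac{\pi M^2}{2}\|X_j\|^2 - \tfrac{2C-1}{C^2}\beta_{j-1}$ and $\beta_j = \max\!\big(\beta_{j-1},\, \tfrac{C\pi M^2}{2}\|X_j\|^2\big)$. If $a\le 0$ this is immediate. If $a>0$, then $\tfrac{\pi M^2}{2}\|X_j\|^2 > \tfrac{2C-1}{C^2}\beta_{j-1} \ge \tfrac1C\beta_{j-1}$ by $C\ge1$, so $\beta_j = \tfrac{C\pi M^2}{2}\|X_j\|^2$; and the desired inequality $\beta_{j-1}+a = \tfrac{(C-1)^2}{C^2}\beta_{j-1} + \tfrac{\pi M^2}{2}\|X_j\|^2 \le \tfrac{C\pi M^2}{2}\|X_j\|^2$ reduces (trivially when $C=1$, and after dividing by $C-1$ when $C>1$) to $\tfrac{C-1}{C^2}\beta_{j-1} \le \tfrac{\pi M^2}{2}\|X_j\|^2$, which follows from $a>0$ together with $\tfrac{2C-1}{C^2}\ge\tfrac{C-1}{C^2}$. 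This closes the induction.

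The computation is entirely elementary, so there is no real technical obstacle; the one point deserving care is the sign analysis of $a$ and the two scalar comparisons $\tfrac{2C-1}{C^2}\ge\tfrac1C$ and $(C-1)^2\le C^2$. These are precisely where the hypothesis $C\ge1$ enters: geometrically, $C\ge 1$ ensures $G_j$ contracts the $X_j$-direction enough that the rank-one update $P_{X_j}$ can absorb the leftover of $\Sigma_{j-1}$, so a single factor of $C$ multiplying $\tfrac{\pi M^2}{2}\|X_j\|^2$ suffices to dominate. I would state these inequalities explicitly in the write-up rather than suppress them, since they are the conceptual reason the scaling parameter must satisfy $C\ge1$.
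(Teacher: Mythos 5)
Your proof is correct and follows essentially the same route as the paper: induct on $j$, use monotonicity of the Loewner order under conjugation by the symmetric matrix $I - \tfrac{1}{C}P_{X_j}$, and reduce the step to a scalar comparison after writing the bound as $\beta$ times the identity plus a scalar multiple of the rank-one projection $P_{X_j}$. The paper streamlines the endgame by replacing $\beta_{j-1}$ with $\beta_j$ before extracting the rank-one coefficient and by bounding $\max_{1\le i\le j}\|X_i\|^2 \ge \|X_j\|^2$ inside the (negative) term, which makes the coefficient manifestly nonpositive and avoids your case split on the sign of $a$.
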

\begin{proof}
	The base case is obvious. Let us now proceed by induction. Suppose $\Sigma_{t-1} \preceq \beta_{t-1} I$. Then 
	\begin{align*}
		\Sigma_{t}&=\Big(I-\dfrac{P_{X_{t}}}{C}\Big)\Sigma_{t-1}\Big(I-\dfrac{P_{X_{t}}}{C}\Big)+\dfrac{\pi M^{2}X_{t}X_{t}^{\top}}{2}\\
		&\preceq \beta_{t-1}\Big(I-\dfrac{P_{X_{t}}}{C}\Big)^{2}+\dfrac{\pi M^{2}X_{t}X_{t}^{\top}}{2}\\
		&=\beta_{t-1}\Big(I+\dfrac{1}{C^{2}}\dfrac{X_{t}X_{t}^{\top}}{\|X_{t}\|^{2}}-2\dfrac{X_{t}X_{t}^{\top}}{C\|X_{t}\|^{2}}\Big)+\dfrac{\pi M^{2}X_{t}X_{t}^{\top}}{2}\\
		&\preceq\beta_{t}\Big(I+\dfrac{1}{C^{2}}\dfrac{X_{t}X_{t}^{\top}}{\|X_{t}\|^{2}}-2\dfrac{X_{t}X_{t}^{\top}}{C\|X_{t}\|^{2}}\Big)+\dfrac{\pi M^{2}X_{t}X_{t}^{\top}}{2}\\
		&=\beta_{t}I+\dfrac{1}{\|X_{t}\|^{2}}\Big(\beta_{t}\Big(\dfrac{1}{C^{2}}-\dfrac{2}{C}\Big)+\dfrac{\pi M^{2}}{2}\|X_{t}\|^{2}\Big)X_{t}X_{t}^{\top}\\
		&=\beta_{t}I+\dfrac{\pi M^2}{\|X_{t}\|^{2}}\Big(\dfrac{C}{2}\max\limits_{1\leq i\leq t}\|X_{i}\|^{2} \Big(\dfrac{1-2C}{C^{2}}\Big)+\dfrac{1}{2}\|X_{t}\|^{2}\Big)X_{t}X_{t}^{\top}.
  \end{align*}
  Since $C\geq1$, we know $1-2C \leq 0$, then we can further deduce
  \begin{align*}
		\Sigma_{t}&\preceq \beta_{t}I+\dfrac{\pi M^{2}}{\|X_{t}\|^{2}}\Big(\dfrac{C}{2}\|X_{t}\|^{2} \Big(\dfrac{1-2C}{C^{2}}\Big)+\dfrac{1}{2}\|X_{t}\|^{2}\Big)X_{t}X_{t}^{\top}\\
		&=\beta_{t}I+\Big(\dfrac{\pi M^{2}}{2} \Big(\dfrac{1}{C}-2\Big)+\dfrac{\pi M^{2}}{2}\Big)X_{t}X_{t}^{\top}\\
		&\preceq \beta_{t}I+\Big(-\dfrac{\pi M^{2}}{2} +\dfrac{\pi M^{2}}{2}\Big)X_{t}X_{t}^{\top}\\
		&= \beta_{t}I.
	\end{align*}
	This completes the induction.
\end{proof}
\section{One-bit Quantization}\label{onebit}

We are now ready to first describe the selection of \( \mathcal{T} \) for one-bit quantization and then provide the corresponding error analysis for running Algorithm \ref{algo} on a single layer of a neural network. Notably, our analysis extends with minimal additional effort to finite multi-bit quantization, a consideration we leave to the interested reader.

\subsection{Choice of $ \mathcal{T} $ for one-bit quantization}

Suppose we want to quantize an $ L $-layer perceptron which has all its weights strictly bounded in absolute value by $ K>0 $, and our goal is to achieve one-bit quantization. We begin by considering an infinite alphabet $\mathcal{A}= \{\dots,-10K,-6K,-2K,2K,6K,10K,\dots\} $. Then, we choose the operator $\mathcal{T}$ to be the stochastic scalar quantizer $ \mathcal{Q}:\mathbb{R}\rightarrow \mathcal{A}$, with \begin{equation*}
		\mathcal{Q}(z)=
		\begin{cases}
                    \left(\lfloor \frac{z}{4K}\rfloor\right)\cdot 4K & \text{with probability $ p_z $}\\
				 \left(\lceil  \frac{z}{4K} \rceil\right)\cdot 4K & \text{with probability $ 1-p_z $},\\
			\end{cases}
	\end{equation*}
where $ p_z=\lceil  \frac{z}{2K} \rceil - \frac{z}{2K} $. 

 It is straightforward to verify that the operator $\mathcal{Q}$ satisfies the assumptions in Theorem \ref{lemma:ubound} with $M=4K$. We will show that with this choice of operator, with high probability, Algorithm \ref{algo} gives logarithmic reconstruction error with respect to the input dimension, and, crucially, only ever uses two values from the alphabet: $-2K$ and $2K$. That is, although the quantizer is defined over an infinite alphabet, the output of the algorithm effectively requires only 1-bit quantization with high probability, while maintaining small errors.
\subsection{Error analysis}
Consider one layer of a neural-network, and denote its action by $ \Phi(x)=\rho(W^{\top}x) $, where $ x\in\mathbb{R}^{N_{0}} $ represents the input data or activations, $ W\in\mathbb{R}^{N_{0}\times N_{1}} $ is the weight matrix and $ \rho $ is the ReLU activation function, or any (say) 1-Lipschitz function. Now, we use Algorithm \ref{algo} to quantize $ \Phi $ using $ X \in\mathbb{R}^{m\times N_{0}}$, where each row represents a data point. The following holds.

\begin{proposition}\label{prop:onebit_quant}  
	 Let $C\geq 1$ and fix any $p\geq1$. Suppose that $Q$ is the quantized weight matrix resulting from Algorithm \ref{algo}, applied to $ \Phi $ with the stochastic quantizer $\mathcal{Q}$. Then, the resulting error satisfies $$\max_{i,j}|\rho(XW)_{ij}-\rho(XQ)_{ij}|\leq 4K\sqrt{2\pi C p \log N_{0}}\max_{1\leq i\leq N_{0}}\|X_{i}\|$$
     and 
     \begin{gather*}
         \max_{i,j}|Q_{i,j}|\leq 2K
     \end{gather*}
     with probability greater than $$ 1-N_{1}\sum_{t=1}^{N_{0}}\sqrt{2}\exp\Big({-\dfrac{C\|X_{t}\|^{2}}{32\pi\max_{1\leq i\leq t-1}\|X_{i}\|^{2}}}\Big)-\sqrt{2}mN_{1}N_{0}^{-p}, $$
	where  $ X_{t} $ represents the $ t $-th column of $ X $.
\end{proposition}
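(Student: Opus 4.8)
The plan is to reduce the layer-wise bound to $N_1$ applications of Theorem~\ref{lemma:ubound}, one per neuron (column of $W$), and then take two union bounds: one over the steps $t = 1, \dots, N_0$ inside each neuron to control the failure event, and one over the $N_1$ neurons. First I would note that for a single layer (equivalently, the first layer of a network) we have $\widetilde{X} = X$, so running Algorithm~\ref{phaseii} on the column $w = W_j$ of $W$ produces $q = Q_j$ with final accumulated error $u_{N_0} = \sum_{t=1}^{N_0}(w_t - q_t)X_t = X(W_j - Q_j)$; hence the $j$-th column of $XW - XQ$ equals $u_{N_0}$ and $\max_i |(XW - XQ)_{ij}| = \|u_{N_0}\|_\infty$ for that run. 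I would also record the (straightforward) verification that $\mathcal{Q}$ is unbiased, $\mathbb{E}\,\mathcal{Q}(z) = z$, and bounded, $|\mathcal{Q}(z) - z| \le 2K + |z| \le 4K$ for all $|z| \le 2K = K + \thr$, so that Theorem~\ref{lemma:ubound} applies with $M = 4K$. Substituting $M = 4K$ into the definition of $\beta_j$ gives $\beta_j = 8\pi C K^2 \max_{1 \le i \le j}\|X_i\|^2$, and Theorem~\ref{lemma:ubound} yields $\widetilde{u}_j \prec_{cx} \mathcal{N}(0, \beta_j I)$ for the auxiliary sequence attached to any fixed column.

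Next, for a fixed column I would invoke the equivalence recorded in Subsection~\ref{tildeu}: the event that Algorithm~\ref{phaseii} does not fail and $\|u_{N_0}\|_\infty \le \alpha$ equals the event $\mathcal{G}$ that $\big|\tfrac{\langle \widetilde{u}_{t-1}, X_t\rangle}{C\|X_t\|^2}\big| \le \thr$ for all $t$ and $\|\widetilde{u}_{N_0}\|_\infty \le \alpha$. To lower bound $\mathbb{P}(\mathcal{G})$ I bound the two complementary bad events. For the failure part, applying part~\ref{linear} of Lemma~\ref{lemma:convex_order} to the linear functional $y \mapsto \langle y, X_t\rangle$ gives $\langle \widetilde{u}_{t-1}, X_t\rangle \prec_{cx} \mathcal{N}(0, \beta_{t-1}\|X_t\|^2)$, so the one-dimensional case of part~\ref{lemma:cx-gaussian-tail} of Lemma~\ref{lemma:convex_order} together with the substitution $\beta_{t-1} = 8\pi C K^2 \max_{i \le t-1}\|X_i\|^2$ gives $\mathbb{P}\big(\big|\tfrac{\langle \widetilde{u}_{t-1}, X_t\rangle}{C\|X_t\|^2}\big| > K\big) = \mathbb{P}\big(|\langle \widetilde{u}_{t-1}, X_t\rangle| > CK\|X_t\|^2\big) \le \sqrt{2}\exp\big(-\tfrac{C\|X_t\|^2}{32\pi \max_{i \le t-1}\|X_i\|^2}\big)$ (the $t = 1$ term being $0$ since $\widetilde{u}_0 = 0$); a union bound over $t = 1, \dots, N_0$ controls the whole failure event. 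For the magnitude part, applying part~\ref{lemma:cx-gaussian-tail} of Lemma~\ref{lemma:convex_order} to $\widetilde{u}_{N_0} \prec_{cx} \mathcal{N}(0, \beta_{N_0}I)$ in dimension $m$ with $\alpha = 4K\sqrt{2\pi C p \log N_0}\,\max_i \|X_i\|$ makes the exponent $\tfrac{\alpha^2}{4\beta_{N_0}}$ exactly $p\log N_0$, giving $\mathbb{P}(\|\widetilde{u}_{N_0}\|_\infty > \alpha) \le \sqrt{2}\,m\,N_0^{-p}$. Combining, $\mathbb{P}(\mathcal{G}) \ge 1 - \sum_{t=1}^{N_0}\sqrt{2}\exp\big(-\tfrac{C\|X_t\|^2}{32\pi \max_{i \le t-1}\|X_i\|^2}\big) - \sqrt{2}\,m\,N_0^{-p}$.

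Finally, since $\widetilde{X} = X$ is common to all columns and the estimate above depends only on $K$ and $X$ (not on the particular column $w$), it holds verbatim for each of the $N_1$ columns; a union bound over columns then shows that, with probability at least $1 - N_1\sum_{t=1}^{N_0}\sqrt{2}\exp\big(-\tfrac{C\|X_t\|^2}{32\pi\max_{i \le t-1}\|X_i\|^2}\big) - \sqrt{2}\,m N_1 N_0^{-p}$, Algorithm~\ref{algo} does not fail on any column and $\|X(W_j - Q_j)\|_\infty \le \alpha$ simultaneously for all $j$, i.e., $\max_{i,j}|(XW)_{ij} - (XQ)_{ij}| \le \alpha$. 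Since $\rho$ is $1$-Lipschitz, $\max_{i,j}|\rho(XW)_{ij} - \rho(XQ)_{ij}| \le \max_{i,j}|(XW)_{ij} - (XQ)_{ij}| \le \alpha$, which is the claimed bound. The steps are individually routine; the only points that need care are bookkeeping ones — tracking the constant $M = 4K$ through $\beta_j$, phrasing the failure event through the auxiliary sequence $\widetilde{u}_t$ so that convex-order domination is available, and checking that the per-column failure bound is column-independent so that the final union bound costs only a factor $N_1$.
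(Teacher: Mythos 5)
Your proposal is correct and follows essentially the same route as the paper: verify $M=4K$ for $\mathcal{Q}$, invoke Theorem~\ref{lemma:ubound} to dominate $\widetilde{u}_j$ by $\mathcal{N}(0,\beta_j I)$, split the good event via the $\widetilde{u}$-reformulation of Subsection~\ref{tildeu} into a per-step failure bound (using parts~\ref{linear} and~\ref{lemma:cx-gaussian-tail} of Lemma~\ref{lemma:convex_order}) plus a terminal $\ell_\infty$ bound (part~\ref{lemma:cx-gaussian-tail} with $\gamma=\sqrt{2}mN_0^{-p}$), apply the Lipschitz property of $\rho$, and finish with a union bound over the $N_1$ columns. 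Aside from a bit more explicit bookkeeping on the verification of the hypotheses on $\mathcal{Q}$, the decomposition and the lemmas used match the paper's proof step for step.
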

\begin{proof}
	Let $\kappa:=4K\sqrt{2\pi C p \log N_{0}}\max_{1\leq i\leq N}\|X_{i}\|$. Consider any neuron (column) $ w\in\mathbb{R}^{N_{0}} $ from $ W $. Let the quantized neuron $ q $ and accumulated error $ u_{t} $ be as defined in Subsection \ref{tildeu}. By Theorem \ref{lemma:ubound}, we have $u_{t} \prec_{cx} \mathcal{N}(0, \beta_{t}I)$, where $\beta_t$ is defined in Lemma \ref{lemma:matrixbound}. Recall that at each step in Algorithm \ref{algo}, the quantized weight is $q_t=\mathcal{Q}\left(w_t+ \tfrac{\langle u_{t-1},X_t\rangle}{C\|X_t\|^2}\right).$ Let $B_t$ be the event $\{|q_t|> 2K\}$. We know $B_t \subseteq A_t= \left\{|\tfrac{\langle u_{t-1},X_t\rangle}{C\|X_t\|^2}|>K\right\}$ since $|w_t|< K$. To obtain the result in the statement, it suffices to bound the following failure probability
    \begin{gather*}
        \mathbb{P}\left(\left\{\|u_{N_0}\|_\infty > \kappa\right\}\cup A_{1} \cup\dots \cup A_{N_0}\right).
    \end{gather*}

    We now bound the probability of each set individually. First, $\mathbb{P}(A_1)$ is 0 since $u_0=0$. We then control $\mathbb{P}(A_t)$ for $t=2,\dots,N_0$. By Theorem \ref{lemma:ubound} and Lemma \ref{lemma:convex_order}, \ref{linear}, we have
	\begin{gather*}
		\dfrac{\langle u_{t-1},X_{t}\rangle}{C\|X_{t}\|^{2}} \prec_{cx}\mathcal{N}\Big(0,\dfrac{\beta_{t-1}}{C^{2}\|X_{t}\|^{2}}\Big),
	\end{gather*}
	where $ \beta_{t-1}=8C\pi K^{2}\max_{1\leq i\leq t-1}\|X_{i}\|^{2}. $
	Applying Lemma \ref{lemma:convex_order}, \ref{lemma:cx-gaussian-tail}, we obtain
	\begin{gather}\label{ineq:failure}
		\P\Big(\Big|\dfrac{\langle u_{t-1},X_{t}\rangle}{C\|X_{t}\|^{2}}\Big|>K\Big)\leq \sqrt{2}\exp\Big({-\dfrac{C\|X_{t}\|^{2}}{32\pi\max_{1\leq i\leq t-1}\|X_{i}\|^{2}}}\Big).
	\end{gather} 
For $\mathbb{P}\left(\left\{\|u_{N_0}\|_\infty \geq \kappa\right\}\right)$, we use the second part of Lemma \ref{lemma:convex_order}, \ref{lemma:cx-gaussian-tail} with $ \gamma=\sqrt{2}mN_{0}^{-p} $ to get
	\begin{gather}\label{ineq:error}
		\P(\|u_{N_{0}}\|_{\infty}> \kappa)\leq \sqrt{2}mN_{0}^{-p}.
	\end{gather}
    Then we use apply a union bound on the failure probability to obtain an upper bound
    \begin{align*}
         \mathbb{P}\left(\left\{\|u_{N_0}\|_\infty > \kappa\right\}\cup A_{1} \cup\dots \cup A_{N_0}\right)\leq \sqrt{2}mN_{0}^{-p}+\sum_{t=2}^{N_0}\sqrt{2}\exp\Big({-\dfrac{C\|X_{t}\|^{2}}{32\pi\max_{1\leq i\leq t-1}\|X_{i}\|^{2}}}\Big).
    \end{align*}
    This implies that
\begin{equation}\label{quan_bound_one}
\begin{aligned}
	&\P(\|\rho(Xw)-\rho(Xq)\|_{\infty}\leq\kappa \text{ and } \|q\|_\infty \leq 2K)\\
	&\geq 1-\sum_{t=2}^{N_{0}}\sqrt{2}\exp\Big({-\dfrac{C\|X_{t}\|^{2}}{32\pi\max_{1\leq i\leq t-1}\|X_{i}\|^{2}}}\Big)-\sqrt{2}mN_{0}^{-p},
\end{aligned}
\end{equation}
by the definition of $ u_{N_{0}} $ and the the Lipschitz property of the activation function $\rho$. Inequality (\ref{quan_bound_one}) holds true for each column of $ W\in\mathbb{R}^{m\times N_{1}} $. Taking a union bound over all the columns completes the proof.

\end{proof}
\begin{remark}
This proposition ensures that each entry of $|Q|$ is bounded by $2K$ with high probability. Moreover, since $Q$ takes values in the discrete alphabet $\mathcal{A}$ and the only elements of $\mathcal{A}$ within $[-2K, 2K]$ are $-2K$ and $2K$, this implies that each entry of $Q$ must be either $-2K$ or $2K$. Hence, although the quantizer is defined over a larger alphabet, the quantized weights are effectively supported on just two values, realizing 1-bit quantization in practice.
\end{remark}
    %
    %
\begin{remark}
    For multi-layer analysis, we can use the `data alignment' technique introduced in \cite{ZS}, which would allow establishing an error bound for the entire neural network. We leave this to the reader.
\end{remark}
 This result demonstrates that, for one neuron $w$, by choosing $C$ proportional to \(\log(N_0)\), the Euclidean norm error $\|Xw-Xq\|$ associated with quantization scales as \( K\sqrt{m}\cdot {\polylog N_0}\cdot\max_{i}\|X_i\| \) with high probability, where $m$ is the number of data points. In the case when $m > N$, one can modify the argument in the proof of Proposition \ref{prop:onebit_quant} via an SVD of the data matrix $X=U\Sigma V^{\top}$. To be more specific, one can prove that running Algorithm \ref{phaseii} with $X$ is equivalent to running it with $\Sigma V^{\top}$. Moreover, $\|X(w-q)\|=\|\Sigma V^\top (w-q)\|$, thereby yielding an error of order \( K\sqrt{r}\cdot {\polylog N_0}\cdot\max_{i}\|X_i\| \), where $r\leq N_0$ is the rank of $X$. In summary, the error scales like \( K\sqrt{\min\{m,N_0\}}\cdot {\polylog N_0}\cdot\max_{i}\|X_i\| \). This stands in sharp contrast to the Euclidean norm error from the naive round-to-nearest (RTN) algorithm, where each entry of \( w \) is rounded to the nearest element of \( \mathcal{A} \). In that case, the only error bound, in general, is via triangle inequalities and leads to a Euclidean norm error $\|X(w-q)\|\leq  \sum_{i=1}^{N_0}\|X_i\| |w_i-q_i| \leq 2K\cdot  N_0 \cdot \max_i \|X_i\| $. 
\section{Pruning}\label{sparse}
In this section, we specify the choice of \( \mathcal{T} \) used for pruning, and we present the error analysis of applying Algorithm \ref{algo} to a single layer of a neural network.
\subsection{Choice of $ \mathcal{T} $ for Pruning}
Suppose our layer weights are bounded by $ K>0 $ in absolute value. Given $ c>0 $, our goal is to encourage weights below $ cK $ to be set to $0$ while keeping weights above $ cK $ unchanged. To achieve this, we set $ \mathcal{T} $ to the stochastic operator $ \mathcal{S} $, with
\begin{equation}\label{molli}
	\mathcal{S}(z)=\begin{cases}
		z& \text{ if $ |z|> cK$,} \\
		\xi_{z}&\text{ if $ |z|\leq cK $.}
	\end{cases}
\end{equation}
Here $ \xi_{z} $ is a random variable that is set to 0 with probability $ 1-\tfrac{2}{(c+1)K} |z|$ and is sampled from $\sgn(z)\cdot {U}[cK,K] $ with probability $ \tfrac{2}{(c+1)K} |z| $. It is straightforward to verify that the operator $\mathcal{S}$ satisfies the assumptions in Theorem \ref{lemma:ubound} with $M$ being $K$.
\subsection{Error analysis}
Consider again $ \Phi(x)=\rho(W^{\top}x) $, where $ x\in\mathbb{R}^{N_{0}} $ is the input data or activation, $ W\in\mathbb{R}^{N_{0}\times N_{1}} $ is the weight matrix and $ \rho $ is the ReLU (or any 1-Lipschitz) activation function. 
We apply Algorithm \ref{algo} to prune $ \Phi $ using $ X \in\mathbb{R}^{m\times N_{0}}$, where each row represents a data point. We can then deduce the following result.

\begin{proposition}
	Let $C\geq 1$ and fix any $p\geq1$. Suppose that $Q$ is the pruned weight matrix resulting from Algorithm \ref{algo}, applied to $ \Phi $ with the stochastic pruning operator $\mathcal{S}$.
    Then, the resulting error satisfies $$\max_{i,j}|\rho(XW)_{ij}-\rho(XQ)_{ij}| \leq K\sqrt{2C\pi p\log N_{0}}\max_{1\leq i \leq N_{0}}\|X_{i}\| $$ with probability greater than $ 1-\sqrt{2}mN_{1}N_{0}^{-p}  $, where $ X_{t} $ represents the $ t $-th column of $ X $.
\end{proposition}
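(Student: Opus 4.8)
The plan is to follow the template of the proof of Proposition \ref{prop:onebit_quant}, adjusting the constants to reflect the pruning operator $\mathcal{S}$ rather than the quantizer $\mathcal{Q}$. First I would fix an arbitrary neuron (column) $w \in \mathbb{R}^{N_0}$ of $W$ and run Algorithm \ref{phaseii} on it with $X = \widetilde{X}$, letting $q$ and $\widetilde{u}_i$ be as in Subsection \ref{tildeu}. Since $\thr = \infty$ here, the failure condition $|\langle \widetilde{u}_{t-1}, X_t\rangle / (C\|X_t\|^2)| > \thr$ can never be triggered, so the algorithm never fails and $u_N = \widetilde{u}_N$ deterministically; this eliminates the first sum in the probability bound of Proposition \ref{prop:onebit_quant} and is exactly why the failure-threshold term is absent from the stated bound. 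The operator $\mathcal{S}$ satisfies the hypotheses of Theorem \ref{lemma:ubound} with $M = K$ (as noted in the paper, since $|\xi_z| \le K$ and $|z| \le cK < K$, the displacement $|\mathcal{S}(z) - z|$ is at most $K$; and one checks $\mathbb{E}\xi_z = \sgn(z)\cdot\frac{|z|}{(c+1/2)K}\cdot\frac{cK+K}{2} = z$ by the choice of the sampling probability).

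Next I would invoke Theorem \ref{lemma:ubound} to get $\widetilde{u}_{N_0} \prec_{cx} \mathcal{N}(0, \beta_{N_0} I)$ with $\beta_{N_0} = \frac{C\pi M^2}{2}\max_{1\le i \le N_0}\|X_i\|^2 = \frac{C\pi K^2}{2}\max_{1\le i \le N_0}\|X_i\|^2$. Applying the second part of Lemma \ref{lemma:convex_order}, \ref{lemma:cx-gaussian-tail}, with $\sigma = \sqrt{\beta_{N_0}}$, $n = m$, and $\gamma = \sqrt{2}m N_0^{-p}$ (so that $2\sigma\sqrt{\log(\sqrt{2}m/\gamma)} = 2\sigma\sqrt{\log(N_0^p)} = 2\sqrt{\beta_{N_0}}\sqrt{p\log N_0} = K\sqrt{2C\pi p \log N_0}\max_i\|X_i\| =: \kappa$), I obtain $\mathbb{P}(\|\widetilde{u}_{N_0}\|_\infty \le \kappa) \ge 1 - \sqrt{2}m N_0^{-p}$. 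Since $\widetilde{u}_{N_0} = \sum_{j=1}^{N_0}(w_j - q_j)X_j = X(w - q)$ (using $X = \widetilde{X}$), this says $\|X(w-q)\|_\infty \le \kappa$ with probability at least $1 - \sqrt{2}m N_0^{-p}$, and by the $1$-Lipschitz property of $\rho$, $\|\rho(Xw) - \rho(Xq)\|_\infty \le \kappa$ on the same event.

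Finally I would union bound over the $N_1$ columns of $W$: the event that $\|\rho(Xw) - \rho(Xq)\|_\infty \le \kappa$ fails for some column has probability at most $N_1 \cdot \sqrt{2}m N_0^{-p} = \sqrt{2}m N_1 N_0^{-p}$, so $\max_{i,j}|\rho(XW)_{ij} - \rho(XQ)_{ij}| \le \kappa = K\sqrt{2C\pi p \log N_0}\max_{1\le i\le N_0}\|X_i\|$ with probability at least $1 - \sqrt{2}m N_1 N_0^{-p}$, as claimed.

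There is no real obstacle here — the proof is essentially a specialization of Proposition \ref{prop:onebit_quant} with $\thr = \infty$ and $M = K$. The only point requiring a modicum of care is the verification that $\mathcal{S}$ meets the hypotheses of Theorem \ref{lemma:ubound} (mean-preservation and the uniform displacement bound $M = K$), and making sure the bookkeeping with $\gamma$, $p$, and the definition of $\kappa$ lines up so the exponent in the tail bound produces exactly $N_0^{-p}$; both are routine.
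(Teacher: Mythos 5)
Your proof follows the paper's argument essentially verbatim: fix a column $w$, observe that $\thr=\infty$ means the failure event never occurs and hence $u_j=\widetilde{u}_j$ throughout (which removes the summed failure term from the probability bound), verify the hypotheses of Theorem~\ref{lemma:ubound} with $M=K$, apply the Gaussian tail bound from Lemma~\ref{lemma:convex_order}(\ref{lemma:cx-gaussian-tail}) with $\gamma=\sqrt{2}mN_0^{-p}$, pass through the $1$-Lipschitz $\rho$, and union-bound over the $N_1$ columns. That all matches the paper.

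The one place you went beyond the paper — spelling out that $\mathbb{E}\mathcal{S}(z)=z$ — contains an error. For $|z|\le cK$ you wrote
$$\mathbb{E}\xi_z=\sgn(z)\cdot\frac{|z|}{(c+1/2)K}\cdot\frac{cK+K}{2}$$
and asserted this equals $z$, but the right-hand side evaluates to $z\cdot\frac{c+1}{2c+1}$, which is not $z$ for any $c>0$. In other words, with $\mathcal{S}$ as defined in \eqref{molli}, the operator is not unbiased: either the sampling probability should be $\frac{2|z|}{(c+1)K}$ or the interval should be $[cK,(c+1)K]$ (whose midpoint $(c+1/2)K$ cancels the normalization) for $\mathbb{E}\xi_z=z$ to hold. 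This is almost certainly a typo in the paper's definition of $\mathcal{S}$ — the paper itself merely asserts the verification is ``straightforward'' — and the remainder of your argument is unaffected once $\mathcal{S}$ is defined so that $\mathbb{E}\mathcal{S}(z)=z$ with $|\mathcal{S}(z)-z|\le K$. But you should not have presented the displayed identity as true: you either made a computational slip, or you computed correctly and failed to notice the mismatch, which is exactly the kind of thing a careful verification is supposed to catch.

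Also, your justification ``$|\mathcal{S}(z)-z|\le K$ since $|\xi_z|\le K$ and $|z|\le cK<K$'' is a bit loose as stated, since in general $|a-b|$ can be as large as $|a|+|b|$; the correct reason is that $\xi_z$ and $z$ share the same sign, so $|\mathcal{S}(z)-z|=\bigl||\xi_z|-|z|\bigr|\le K$. And this argument implicitly requires $c<1$ (so that $[cK,K]$ is a nonempty interval and $|z|\le cK\le K$), a restriction the paper's ``$c>0$'' does not make explicit.
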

\begin{proof}
	Consider any neuron (column) $ w\in\mathbb{R}^{\N_{0}} $ from $ W $. Let $ q $ and $ u_{i} $ be defined as in Subsection \ref{tildeu}. We run Algorithm \ref{phaseii} independently for each $w$. 
	
	It is straightforward to verify that the assumptions in Theorem \ref{lemma:ubound} are satisfied with $ M = K $.  Thus, from Theorem \ref{lemma:ubound}, we deduce the accumulated error satisfies
	\begin{gather*}
		u_{t}\prec_{cx} \mathcal{N}(0,\beta_{t}I) \text{ for $ 1\leq t \leq N _{0}$},
	\end{gather*}
	where $\beta_{t}=\tfrac{C\pi K^{2}}{2}\max_{1\leq i\leq t}\|X_{i}\|^{2}.$
	
	Applying Lemma \ref{lemma:convex_order}, \ref{lemma:cx-gaussian-tail} with $ \gamma  = \sqrt{2}mN_{0}^{-p} $, we obtain
	\begin{gather*}
		\P(\|u_{N_{0}}\|_{\infty}> K\sqrt{2C\pi p\log N_{0}}\max_{1\leq i \leq N}\|X_{i}\|)\leq  \sqrt{2}mN_{0}^{-p} .
	\end{gather*}
This implies 
\begin{equation}\label{sparse_one}
\begin{aligned}
	&\P(\|\rho(Xw)-\rho(Xq)\|_{\infty}> K\sqrt{2C\pi p\log N_{0}}\max_{1\leq i \leq N}\|X_{i}\|)
	\leq  \sqrt{2}mN_{0}^{-p} .
\end{aligned}
\end{equation}
	The inequality (\ref{sparse_one}) holds for any column $w$ of $ W\in \mathbb{R}^{N_{0}\times N_{1}}$. Applying a union bound over all the columns completes the proof.
\end{proof}

\section{Quantization with pruning}\label{quant_prune}
In this section, we show that our methods can handle combinations of operators, and provide the choices for $ \mathcal{T} $ used in joint quantization with pruning, followed by an error analysis for applying Algorithm \ref{algo} to a single layer of a neural network.

\subsection{Choice of $ \mathcal{T} $ for Quantization with Pruning}

Consider an $ L $-layer perceptron with all weights strictly bounded by $ K > 0 $ in absolute value.  We again begin with an infinite alphabet $\mathcal{A}= \{\dots,-6K,-4K,-2K,0,2K,4K,6K,\dots\} $. Let $ \mathcal{S} $ be the pruning operator defined in \eqref{molli}, with $ c>0 $, and let  $ \mathcal{Q}:\mathbb{R}\rightarrow \mathcal{A}$ be the stochastic scalar quantizer with \begin{equation*}
		\mathcal{Q}(z)=
		\begin{cases}
                    \left(\lfloor \frac{z}{2K}\rfloor\right)\cdot 2K & \text{with probability $ p_z $}\\
				 \left(\lceil  \frac{z}{2K} \rceil\right)\cdot 2K & \text{with probability $ 1-p_z $},\\
			\end{cases}
	\end{equation*}
    where $ p_z=\lceil  \frac{z}{2K} \rceil - \frac{z}{2K} $. Our goal is to encourage weights below $ cK $ to be set to $0$ and then quantize the weights using $\mathcal{A}$.
%
%
To that end, we define $ \mathcal{T} = \mathcal{Q} \circ \mathcal{S} $.
\subsection{Error analysis} 
With the same notation as before, we can  deduce the following result.

\begin{proposition}
	Let $C\geq 1$, and fix any $p\geq1$. Suppose that $Q$ is the pruned and quantized weight matrix resulting from Algorithm \ref{algo}, applied to $ \Phi $ with the stochastic operator $\mathcal{T}=\mathcal{Q}\circ\mathcal{S}$.
    Then, the resulting error satisfies  $$\max_{i,j}|\rho(XW)_{ij}-\rho(XQ)_{ij}|\leq 2K\sqrt{2\pi C p \log N_{0}}\max_{1\leq i\leq N_0}\|X_{i}\|$$ 
    and
    \begin{gather*}
        \max_{i,j}|Q|_{ij}\leq 2K
    \end{gather*}
    with probability greater than $$ 1-N_{1}\sum_{t=1}^{N_{0}}\sqrt{2}\exp\Big({-\dfrac{C\|X_{t}\|^{2}}{8\pi\max_{1\leq i\leq t-1}\|X_{i}\|^{2}}}\Big)-\sqrt{2}mN_{1}N_{0}^{-p}, $$
	where $ X_{t} $ represents the $ t $-th column of $ X $.
\end{proposition}
\begin{proof}
First, we verify that the operator $ \mathcal{T} $ satisfies the assumptions of Theorem \ref{lemma:ubound} with $ M = 2K $.

    We begin by showing that for any $z \in \mathbb{R}$, the error $|\mathcal{T}(z) - z|$ is bounded by $2K$. Indeed, if $|z| > cK$, then $\mathcal{S}(z) = z$, so $\mathcal{T}(z) = \mathcal{Q}(z)$, which rounds $z$ up or down to an adjacent alphabet element; since the grid size is $2K$, the rounding error is at most $2K$. If $|z| \leq cK$, then $\mathcal{S}(z)$ maps $z$ to a value in $[-K, K]$ with the same sign. Without loss of generality, suppose $z \in [0, K]$, so $\mathcal{S}(z) \in [0, K]$ and $\mathcal{Q}(\mathcal{S}(z)) \in [0, 2K]$. Hence, both $z$ and $\mathcal{T}(z)$ lie in $[0, 2K]$, and the total deviation remains at most $2K$.

For the expectation, we consider two independent probability spaces $(\Omega_1, \mathcal{E}_1, \mathbb{P}_1)$ and $(\Omega_2, \mathcal{E}_2, \mathbb{P}_2)$, which govern the randomness of the operators $\mathcal{Q}_{\omega_1}$ and $\mathcal{S}_{\omega_2}$ respectively, with $\omega_1 \in \Omega_1$ and $\omega_2 \in \Omega_2$. Jointly, the probability space for $\mathcal{T}$ is the direct product of these two spaces. Then, for any $z \in \mathbb{R}$,
\begin{align*} 
\mathbb{E}[\mathcal{Q}_{\omega_1} \circ \mathcal{S}_{\omega_2}(z)] &= \int_{\Omega_2} \int_{\Omega_1} \mathcal{Q}_{\omega_1}(\mathcal{S}_{\omega_2}(z))\, d\mathbb{P}_1(\omega_1) \, d\mathbb{P}_2(\omega_2) \\ &= \int_{\Omega_2} \mathbb{E}_{\omega_1}[\mathcal{Q}_{\omega_1}(\mathcal{S}_{\omega_2}(z))] \, d\mathbb{P}_2(\omega_2) \\ &= \int_{\Omega_2} \mathcal{S}_{\omega_2}(z) \, d\mathbb{P}_2(\omega_2) \\ &= \mathbb{E}_{\omega_2}[\mathcal{S}_{\omega_2}(z)] = z.
\end{align*} The third equality follows from the unbiasedness of $\mathcal{Q}$, and the final equality from the unbiasedness of $\mathcal{S}$. The rest of the proof proceeds identically to that of  Proposition \ref{prop:onebit_quant}, so we omit the details.
\end{proof}


\section*{Acknowledgment}
We gratefully acknowledge partial support by National Science Foundation Grants DMS-2012546 and DMS-2410717.

\bibliographystyle{abbrv}
\bibliography{references}
\end{document}